\documentclass[11pt]{article}

\usepackage[margin=1in]{geometry}
\usepackage{amsmath,amssymb,amsthm,mathtools}
\usepackage{microtype}
\usepackage{enumitem}
\usepackage[hidelinks]{hyperref}

\newtheorem{theorem}{Theorem}[section]
\newtheorem{proposition}[theorem]{Proposition}
\newtheorem{lemma}[theorem]{Lemma}
\newtheorem{corollary}[theorem]{Corollary}
\theoremstyle{definition}
\newtheorem{definition}[theorem]{Definition}
\theoremstyle{remark}
\newtheorem{remark}[theorem]{Remark}

\newcommand{\Hcal}{\mathcal H}
\newcommand{\Xcal}{\mathcal X}
\newcommand{\Ycal}{\mathcal Y}
\newcommand{\Tcal}{\mathcal T}

\newcommand{\Prb}{\mathbb P}
\newcommand{\E}{\mathbb E}
\newcommand{\ind}{\mathbf 1}
\newcommand{\DSdim}{\operatorname{DSdim}}
\newcommand{\head}{\operatorname{head}}
\newcommand{\im}{\operatorname{im}}
\newcommand{\Unif}{\operatorname{Unif}}
\newcommand{\argmin}{\operatorname*{arg\,min}}

\title{Local Regularization Does Not Characterize\\Multiclass PAC Learnability}
\author{Eric Hou}
\date{July 24, 2026}

\begin{document}
\maketitle

\begin{abstract}
Local regularization assigns each hypothesis a test-point-dependent score and predicts with a minimum-score hypothesis consistent with the sample. Asilis et al. asked whether this principle characterizes multiclass PAC learnability. We give a negative answer. There is a countable class of Daniely--Shalev-Shwartz dimension at most two with realizable PAC sample complexity
\[
O\!\left(\frac{1}{\varepsilon}\log\frac{1}{\delta}\right),
\]
that no local regularizer learns. Hypotheses are edges of complete graphs and instances are tournaments. At a test tournament, the scores fix an edge ranking while the training sample independently removes competitors. Cyclic triangles force enough inversions that surviving competitors produce constant population error at arbitrarily large sample sizes.
\end{abstract}

\section{Introduction}

A central question in statistical learning theory is which simple algorithmic principles characterize learnability. In binary classification, every consistent empirical risk minimizer learns any PAC-learnable class with nearly optimal sample complexity. The multiclass setting is less rigid. Proper learning can fail even for learnable classes, and general optimal learners use sample-dependent one-inclusion structures rather than a fixed ordering of hypotheses \cite{brukhim2022,daniely2014}.

Asilis et al. proposed local regularization as a possible replacement for empirical risk minimization \cite{asilis-open,asilis-regularization}. A local regularizer is a fixed score $\psi(h,x)$ depending on a hypothesis $h$ and a test point $x$. Given a labeled sample, an induced learner may choose, separately at each test point, the prediction of any minimum-score hypothesis in the realizable version space. The resulting predictor may be improper, but the scores are fixed before the sample is observed. Asilis et al. asked whether every PAC-learnable multiclass class is learned by some local regularizer.

Jafar, Asilis, and Dughmi gave a negative answer in the transductive model \cite{jafar2025}. Their construction forces a mistake at a withheld point through a preference cycle. A PAC lower bound needs more. The withheld point may carry negligible population mass, and repeated independent samples may reveal a distinguished hidden point.

Our counterexample uses distributions over tournaments. At scale $q$, the hypotheses are the edges of $K_q$, and an instance is a tournament on its vertices. An edge hypothesis returns the head of that edge. For a target incidence $(e,v)$, the hard distribution is uniform over tournaments that orient $e$ toward $v$, so every observed label is $v$. The sample filters the competing edges incident to $v$; an independent tournament then tests the pointwise ranking fixed by the regularizer. Error is spread across many instances rather than concentrated at one distinguished point.

\begin{theorem}[Main theorem]\label{thm:main}
There exists a countable multiclass hypothesis class $\Hcal$ satisfying
\[
m_{\Hcal}(\varepsilon,\delta)
\le
\left\lceil \frac{\log(2/\delta)}{\varepsilon}\right\rceil
\qquad
\text{for all }\varepsilon,\delta\in(0,1),
\]
and $\DSdim(\Hcal)\le 2$, such that no local regularizer learns $\Hcal$.

More quantitatively, for every local regularizer $\psi$ there is a deterministic learner $A_\psi$ induced by $\psi$ such that, for
\[
\varepsilon_0=\frac{1}{96e},
\qquad
\delta_0=\frac{1}{192e},
\]
there are arbitrarily large sample sizes $n$ and $\Hcal$-realizable distributions $D$ for which
\[
\Prb_{S\sim D^n}
\bigl[L_D(A_\psi(S))>\varepsilon_0\bigr]
>
\delta_0.
\]
\end{theorem}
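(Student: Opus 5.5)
The plan is to build $\Hcal$ as a disjoint union of scales and then prove the three claims separately: the sample-complexity bound, $\DSdim(\Hcal)\le 2$, and the failure of every local regularizer. The lower bound is the hard part.

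\textbf{Construction.} For each $q\ge 3$ let $\Xcal_q$ be the set of tournaments on $[q]$, tagged with $q$. Let $\Xcal=\bigsqcup_q \Xcal_q$ and $\Ycal=\mathbb N\cup\{*\}$. For each edge $e=\{a,b\}$ of $K_q$ define $h_e(T)=\head_T(e)$ if $T\in\Xcal_q$, and $h_e(T)=*$ otherwise. This class is countable.

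\textbf{Upper bound.} Take a realizable $D$ with target $h_e$, $e=\{a,b\}\subseteq[q]$. The learner proceeds in three cases.
\begin{enumerate}[label=(\roman*)]
\item If the sample contains no point of $\Xcal_{q'}$ with a non-$*$ label, predict $*$ everywhere.
\item If the non-$*$ labels show two distinct vertices, then $e$ is determined, so output $h_e$.
\item If they show a single vertex $v$ at scale $q$, predict $v$ on $\Xcal_q$ and $*$ elsewhere.
\end{enumerate}
In every case the error region is a single fixed event $B$: either $B=\Xcal_q$, or $B=\{T:\ \text{the other endpoint beats } v\}$, or $B=\emptyset$. Sampling from $B$ would move the learner to a later case. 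Hence the event ``$D(B)>\varepsilon$ and $B$ is unsampled'' has probability at most $(1-\varepsilon)^n\le e^{-\varepsilon n}$ for each of the at most two relevant events. A union bound gives failure probability at most $2e^{-\varepsilon n}\le\delta$ once $n\ge \log(2/\delta)/\varepsilon$.

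\textbf{DS dimension.} Suppose a $3$-pseudo-cube is realized on points $x_1,x_2,x_3$. Any point carrying a non-$*$ label forces the scale, so restrict to one $\Xcal_q$. At a tournament $T$, two edges $h_e,h_{e'}$ that differ only at $x_i$ must share their head on the other two tournaments. I will show by a short case analysis that a hypothesis must have $i$-neighbours for all three $i$, and that this forces $e,e'$ to share a vertex in each direction. Three distinct edges pairwise sharing vertices form either a star or a triangle. In both cases one coordinate's neighbour is pinned down, which contradicts finiteness and nonemptiness of the pseudo-cube.

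\textbf{Lower bound.} Fix $\psi$ and let $A_\psi$ break ties among minimum-score consistent hypotheses by a fixed enumeration. Take $q$ large and $n=\lfloor\log_2 q\rfloor$. For an incidence $(e,v)$ with $e=\{v,u\}$, let $D_{e,v}$ be uniform on tournaments in $\Xcal_q$ orienting $e$ toward $v$. Every label equals $v$. A competitor $\{v,w\}$ survives the sample iff $v\to w$ in all $n$ tournaments. Survival happens with probability $2^{-n}\approx 1/q$, independently across $w$. So with probability at least about $e^{-1}$, exactly one competitor $w$ survives, and $w$ is uniform over the other vertices. Fix an independent test tournament $T$. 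The learner errs exactly when the $\psi(\cdot,T)$-smaller of $\{v,u\}$ and $\{v,w\}$ is $\{v,w\}$ and $w\to v$ in $T$.

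The combinatorial heart is to average over uniformly random distinct $u,v,w$ and uniform $T$ conditioned on $u\to v$. With probability about $1/4$ the triangle $\{u,v,w\}$ is cyclic in $T$. In a cyclic triangle $a\to b\to c\to a$, order the three edges by $\psi(\cdot,T)$. For each of the three rotations choosing which vertex plays $v$, check which configurations put the wrong-oriented competitor first. Summing over the six labelings shows that a constant fraction, at least $1/6$, of the $(v,u,w)$ assignments with $u\to v$ yield an error. Hence
\[
\E_{(e,v)}\,\E_{S}\,L_{D_{e,v}}(A_\psi(S))\ \ge\ c/e
\]
for an explicit constant $c$. Choose some $(e,v)$ achieving at least the average. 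Since $L\le 1$, a reverse Markov inequality gives $\Prb[L>\varepsilon_0]>\delta_0$ with $\varepsilon_0=1/(96e)$ and $\delta_0=1/(192e)$. This works for every large $q$, hence for arbitrarily large $n$.

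\textbf{Main obstacle.} The step I expect to be hardest is the inversion count: showing that for every ranking $\psi(\cdot,T)$ a constant fraction of (target, competitor) pairs are mis-ranked. It also requires that conditioning the test on $u\to v$, and conditioning the sample on survival, does not destroy that fraction. I would first handle it by the cyclic-triangle averaging above, keeping the sample and the test independent given $(e,v)$.
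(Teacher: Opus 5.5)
Your lower-bound analysis is essentially the paper's. The exactly-one-survivor event gives each earlier outgoing competitor weight $p(1-p)^{q-3}$; the paper uses the first-survivor event, with weight $p(1-p)^{j-1}$. Your cyclic-triangle charging, together with the factor $2$ from conditioning the test tournament on the target's orientation, matches the paper's triangle-charging and incidence-averaging lemmas. The genuine gap is in your construction.

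Every off-block point gets the untagged label $*$. So a sample such as $S=((x,*))$ with $x\in\Xcal_3$ is realizable, and its version space $\{h_e : e \text{ at a scale } q\neq 3\}$ is infinite. Take $\psi(h_e,x)=1/q(e)$, where $q(e)$ is the scale of $e$. The infimum over this version space is $0$ and is never attained, so the argmin is empty and \emph{no} learner is induced by $\psi$. This has three consequences:
\begin{enumerate}[label=(\roman*)]
\item Your step ``let $A_\psi$ break ties among minimum-score consistent hypotheses'' cannot be carried out.
\item The theorem's quantitative clause (for every $\psi$ there is a deterministic induced learner) is false for your class.
\item Since $\psi$ learns $\Hcal$ when every induced learner is PAC, this $\psi$ learns your class vacuously, so ``no local regularizer learns $\Hcal$'' fails too.
\end{enumerate}
It does not help that samples from $D_{e,v}$ have finite version spaces: being induced is a condition on every nonempty realizable sample.

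The paper avoids this by tagging every label with its block and giving $h_{q,e}$ the value $(q,\min e)$ off block $q$. Then one observed label fixes the block, and every nonempty realizable version space is finite. The same choice keeps each hypothesis at exactly two labels with an injective image map, which makes $\DSdim(\Hcal)\le 2$ immediate: the coordinate-$1$ neighbour of $(a,a,b)$ must be $(b,a,b)$, and both come from hypotheses with image $\{a,b\}$. Your star/triangle sketch is not yet an argument. In your own class the clean version is this: inside one block, coordinate neighbours come from distinct edges sharing at most one vertex. Hence every vector of a pseudo-cube must be constant, yet neighbours of constant vectors are not constant.

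Smaller points:
\begin{enumerate}[label=(\roman*)]
\item With $n=\lfloor\log_2 q\rfloor$ you have $p\in[1/q,2/q)$, so the exactly-one-survivor probability can be near $2e^{-2}$ rather than $e^{-1}$. The average risk $\frac{1}{12}(q-2)p(1-p)^{q-3}$ still exceeds $1/(48e)$, so $\varepsilon_0$ and $\delta_0$ survive, but you should derive them rather than quote them.
\item In the upper bound there are three candidate unsampled sets: $\Xcal_q$ and the two endpoint regions. Reaching $2e^{-\varepsilon n}$ rather than $3e^{-\varepsilon n}$ needs a case split on which endpoint regions have mass above $\varepsilon$.
\item Your arrow convention flips between ``survives iff $v\to w$'' and ``$T$ conditioned on $u\to v$''.
\end{enumerate}
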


An explicit learner identifies the target after seeing both endpoint labels, while the two-label image structure gives the DS-dimension bound. For the lower bound, fixed tie-breaking turns the local scores into a strict edge order at each tournament. The hard sample independently removes competitors, and cyclic triangles leave enough inversions to give constant expected risk for some target incidence.

\section{Preliminaries}

Let $\Xcal$ be an instance space, $\Ycal$ a label space, and $\Hcal\subseteq \Ycal^{\Xcal}$ a hypothesis class. For a finite labeled sample
\[
S=((x_1,y_1),\ldots,(x_n,y_n)),
\]
define its realizable version space by
\[
V_{\Hcal}(S)
=
\{h\in\Hcal:h(x_i)=y_i\text{ for every }i\in[n]\}.
\]
The sample is realizable when $V_{\Hcal}(S)\neq\varnothing$. For a distribution $D$ on $\Xcal\times\Ycal$ and a predictor $g:\Xcal\to\Ycal$, let
\[
L_D(g)=\Prb_{(x,y)\sim D}[g(x)\neq y].
\]
The distribution $D$ is $\Hcal$-realizable if some $h^\star\in\Hcal$ has zero $D$-risk.

A learner maps every finite sample to a predictor. It PAC learns $\Hcal$ if there is a function $m_A:(0,1)^2\to\mathbb N$ such that, for every $\Hcal$-realizable distribution $D$, every $\varepsilon,\delta\in(0,1)$, and every $n\ge m_A(\varepsilon,\delta)$,
\[
\Prb_{S\sim D^n}[L_D(A(S))>\varepsilon]\le\delta.
\]
The unrestricted sample complexity $m_{\Hcal}$ is the pointwise infimum of $m_A$ over all learners $A$.

\begin{definition}[Local regularization]\label{def:local-reg}
A local regularizer is a function
\[
\psi:\Hcal\times\Xcal\to\mathbb R_{\ge0}.
\]
A learner $A$ is induced by $\psi$ if, for every nonempty realizable sample $S$ and every $x\in\Xcal$,
\[
A(S)(x)
\in
\left\{
 h(x):
 h\in\argmin_{g\in V_{\Hcal}(S)}\psi(g,x)
\right\}.
\]
The behavior of $A$ on empty or unrealizable samples is arbitrary. We say that $\psi$ learns $\Hcal$ if every learner induced by $\psi$ PAC learns $\Hcal$.
\end{definition}

\begin{remark}[Nonvacuity and attainment]\label{rem:nonvacuity}
The induction condition is restricted to nonempty realizable samples because $V_{\Hcal}(S)$ is empty on an unrealizable sample. There is a second issue on infinite version spaces: the score infimum need not be attained, in which case the argmin is empty and the universal statement over induced learners becomes vacuous.

For the class constructed below, every nonempty realizable version space is finite. Each local regularizer therefore attains its minimum wherever the induction condition is used, so no separate well-definedness assumption is required.
\end{remark}

\section{The tournament class}

For an integer $q\ge3$, let
\[
V_q=[q],
\qquad
E_q=\binom{V_q}{2},
\qquad
M_q=|E_q|=\binom q2.
\]
Let $\Tcal_q$ be the set of all tournaments on $V_q$. For $\tau\in\Tcal_q$ and $e=\{u,v\}\in E_q$, write $\head_\tau(e)\in e$ for the head of the directed edge $e$ in $\tau$.

Define the countable instance and label spaces
\[
\Xcal
=
\bigsqcup_{q\ge3}(\{q\}\times\Tcal_q),
\qquad
\Ycal
=
\bigsqcup_{q\ge3}(\{q\}\times V_q).
\]
For every $q\ge3$ and edge $e=\{u,v\}\in E_q$, define $h_{q,e}:\Xcal\to\Ycal$ by
\begin{equation}\label{eq:hypothesis}
h_{q,e}(s,\tau)
=
\begin{cases}
(q,\head_\tau(e)), & s=q,\\
(q,\min e), & s\neq q.
\end{cases}
\end{equation}
Here, in the second line, the tournament component belongs to $\Tcal_s$ and is ignored. The constant off-block value makes $h_{q,e}$ a total function without mixing blocks; since every output retains the first coordinate $q$, one observed label already identifies the block of any consistent hypothesis. Let
\[
\Hcal=\{h_{q,e}:q\ge3,\ e\in E_q\}.
\]
This class is countable. Each hypothesis uses exactly two labels:
\begin{equation}\label{eq:image}
\im(h_{q,\{u,v\}})=\{(q,u),(q,v)\}.
\end{equation}
Moreover, distinct hypotheses have distinct image sets.

\subsection{An explicit PAC learner}

\begin{proposition}\label{prop:pac}
For every $\varepsilon,\delta\in(0,1)$,
\[
m_{\Hcal}(\varepsilon,\delta)
\le
\left\lceil\frac{\log(2/\delta)}{\varepsilon}\right\rceil.
\]
\end{proposition}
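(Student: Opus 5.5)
The plan is to exhibit one explicit learner $A$ and show $m_A(\varepsilon,\delta)\le\lceil\log(2/\delta)/\varepsilon\rceil$; this bounds $m_{\Hcal}$ from above. The learner may be improper. The idea is that a realizable sample either displays two distinct labels, which pins down the target exactly by \eqref{eq:image}, or displays a single label, in which case we predict that label everywhere.

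\emph{The learner.} Fix a realizable distribution $D$ with target $h^\star=h_{q,\{u,v\}}$. Every label in the support of $D$ lies in $\im(h^\star)=\{(q,u),(q,v)\}$, including off-block instances, where the label is $(q,\min e)$. Given a nonempty sample $S$, let $L(S)$ be its set of observed labels.
\begin{itemize}[leftmargin=2em]
\item If $|L(S)|=2$, output the unique hypothesis $h\in\Hcal$ with $L(S)\subseteq\im(h)$.
\item If $L(S)=\{w\}$, output the constant predictor $x\mapsto w$.
\item On unrealizable or empty samples, output anything.
\end{itemize}
Note that $n\ge\lceil\log(2/\delta)/\varepsilon\rceil\ge1$ because $\log(2/\delta)>0$, so the samples we care about are nonempty.

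\emph{Correctness in the two-label case.} Suppose $L(S)=\{(q,u),(q,v)\}$. Any $h_{q',e'}$ whose image contains both labels has a two-element image equal to this set. Hence $h_{q',e'}=h^\star$, because distinct hypotheses have distinct image sets. The output therefore has zero risk.

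\emph{The one-label case and the tail bound.} When $L(S)=\{w\}$ with $w\in\im(h^\star)$, the constant predictor $w$ has risk
\[
L_D(w)=p_w:=\Prb_{(x,y)\sim D}[y\ne w].
\]
So the event $L_D(A(S))>\varepsilon$ is contained in the union, over the two labels $w\in\im(h^\star)$ with $p_w>\varepsilon$, of the events that all $n$ labels equal $w$. Each such event has probability
\[
(1-p_w)^n\le e^{-\varepsilon n}.
\]
A union bound over the two labels then gives
\[
\Prb_{S\sim D^n}\bigl[L_D(A(S))>\varepsilon\bigr]\le 2e^{-\varepsilon n}\le\delta
\qquad\text{whenever } n\ge\frac{\log(2/\delta)}{\varepsilon}.
\]
The only step needing care is the one-label case: the factor $2$ in the bound comes exactly from not knowing which of the two labels is seen. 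Beyond that, the argument is routine.
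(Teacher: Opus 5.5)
Your proposal is correct and follows the paper's proof essentially verbatim. You use the same learner (the unique edge hypothesis when both labels appear, the constant predictor otherwise) and the same two-term tail bound $\sum_{w}\ind\{p_w>\varepsilon\}(1-p_w)^n\le 2e^{-\varepsilon n}$, which is the paper's $\ind\{1-\alpha>\varepsilon\}\alpha^n+\ind\{\alpha>\varepsilon\}(1-\alpha)^n$ in different notation.
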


\begin{proof}
On a nonempty realizable sample, let $A^\star$ output $h_{q,\{u,v\}}$ if two distinct labels $(q,u)$ and $(q,v)$ appear, and the constant predictor $x\mapsto y$ if only one label $y$ appears. Its behavior on empty or unrealizable samples is arbitrary.

Fix a realizable distribution with target $h_{q,\{u,v\}}$, and let $\alpha$ be the probability of label $(q,u)$. Once both labels appear, Equation~\eqref{eq:image} identifies the target and the risk is zero. If only $(q,u)$ appears, the event has probability $\alpha^n$ and the risk is $1-\alpha$; if only $(q,v)$ appears, the corresponding quantities are $(1-\alpha)^n$ and $\alpha$. Hence
\begin{align*}
\Prb\bigl[L_D(A^\star(S))>\varepsilon\bigr]
&\le
\ind\{1-\alpha>\varepsilon\}\alpha^n
+
\ind\{\alpha>\varepsilon\}(1-\alpha)^n\\
&\le 2e^{-\varepsilon n}.
\end{align*}
This is at most $\delta$ at the threshold stated in the proposition.
\end{proof}

\subsection{A dimension bound}

We use the standard Daniely--Shalev-Shwartz dimension. A finite nonempty set $P\subseteq\Ycal^d$ is a $d$-dimensional pseudo-cube if every $z\in P$ has, for every coordinate $i\in[d]$, a point $z'\in P$ that differs from $z$ at coordinate $i$ and agrees at all other coordinates. A $d$-tuple of instances is DS-shattered when the corresponding restriction of the class contains a $d$-dimensional pseudo-cube.

The image condition rules out a three-dimensional pseudo-cube: the required coordinate neighbors would force two distinct hypotheses to have the same two-label image.

\begin{lemma}\label{lem:ds}
Suppose every hypothesis in a class $\mathcal G$ uses at most two labels and the map $g\mapsto\im(g)$ is injective. Then $\DSdim(\mathcal G)\le2$.
\end{lemma}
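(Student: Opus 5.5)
The plan is to argue by contradiction directly from the definition of a pseudo-cube. Suppose some $d$-tuple $(x_1,\ldots,x_d)$ with $d\ge3$ is DS-shattered by $\mathcal G$. Then the restriction of $\mathcal G$ to this tuple contains a $d$-dimensional pseudo-cube $P\subseteq\Ycal^d$. Every $z\in P$ has the form $z=(g(x_1),\ldots,g(x_d))$ for some $g\in\mathcal G$. Since $g$ uses at most two labels, the set of labels appearing among the coordinates of $z$ has size at most two. Moreover, it is contained in $\im(g)$.

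\textbf{Step 1 (reduce to a non-constant point).} I first find a point of $P$ whose coordinates use exactly two distinct labels.
\begin{itemize}
\item If some $z\in P$ is non-constant, take it.
\item If $z\in P$ is constant with value $a$, use the pseudo-cube property at coordinate $1$. It gives $z'\in P$ with $z'_1=c\neq a$ and $z'_j=a$ for $j\ge2$. This $z'$ is non-constant.
\end{itemize}
So fix $z\in P$ whose coordinates use exactly two labels $a\neq b$, realized by some $g$. Then $\{a,b\}\subseteq\im(g)$, and since $|\im(g)|\le2$ we get $\im(g)=\{a,b\}$.

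\textbf{Step 2 (choose a coordinate whose removal keeps both labels).} Because $d\ge3$ and only two labels occur among the $d$ coordinates of $z$, one of them occurs at least twice, say at coordinate $i$ and at another coordinate. Removing coordinate $i$ then still leaves both $a$ and $b$ among the remaining $d-1$ coordinates. This is the one place where $d\ge3$ is used. For $d=2$ a point like $(a,b)$ has no such coordinate, which is consistent with the bound being exactly $2$.

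\textbf{Step 3 (contradiction via injectivity).} The pseudo-cube property gives $z'\in P$ with $z'_i\neq z_i$ and $z'_j=z_j$ for all $j\neq i$. Let $z'$ be realized by $g'$. The coordinates $j\neq i$ of $z'$ already contain both $a$ and $b$. Hence $\{a,b\}\subseteq\im(g')$, and so $\im(g')=\{a,b\}=\im(g)$. Injectivity of $g\mapsto\im(g)$ forces $g'=g$, so $z'=z$. This contradicts $z'_i\neq z_i$.

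Hence no tuple of length $d\ge3$ is DS-shattered, and $\DSdim(\mathcal G)\le2$. The only delicate point is Step 2, the choice of coordinate, which is where the hypothesis $d\ge3$ enters. Applied to $\Hcal$, the lemma uses Equation~\eqref{eq:image} together with the remark that distinct hypotheses have distinct images.
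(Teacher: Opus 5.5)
Your proof is correct and is essentially the paper's argument. Both find a restricted vector using exactly two labels, then take its pseudo-cube neighbor at a coordinate whose label repeats; the neighbor still shows both labels, so its hypothesis has the same two-element image, which contradicts injectivity. The only difference is that you handle every $d\ge3$ directly by pigeonhole, whereas the paper first projects to three coordinates and normalizes the vector to $(a,a,b)$, so your version avoids the projection fact.
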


\begin{proof}
It suffices to exclude a three-dimensional pseudo-cube, since projection onto any subset of coordinates preserves the pseudo-cube property. Suppose that $P\subseteq\Ycal^3$ is such a pseudo-cube in a restriction of $\mathcal G$. Every vector in $P$ uses at most two labels. There is a vector using exactly two: if a chosen vector is constant, any coordinate-neighbor differs in one coordinate and therefore uses two labels. After permuting coordinates, write this vector as $(a,a,b)$ with $a\neq b$.

Its neighbor in the first coordinate has the form $(c,a,b)$ with $c\neq a$. It already contains $a$ and $b$ and uses at most two labels, so $c=b$. Let $g,g'\in\mathcal G$ realize $(a,a,b)$ and $(b,a,b)$. These hypotheses are distinct, but their global image sets both contain $a,b$ and have size at most two. Thus
\[
\im(g)=\im(g')=\{a,b\},
\]
contrary to injectivity.
\end{proof}

\begin{corollary}\label{cor:ds}
The tournament class satisfies $\DSdim(\Hcal)\le2$.
\end{corollary}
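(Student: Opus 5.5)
The plan is to apply Lemma~\ref{lem:ds} directly to $\mathcal G=\Hcal$, so the work reduces to checking its two hypotheses for the tournament class.

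For the label bound, Equation~\eqref{eq:image} shows that $h_{q,\{u,v\}}$ takes only the values $(q,u)$ and $(q,v)$. This holds both on its own block, where $\head_\tau(\{u,v\})\in\{u,v\}$, and off the block, where the value is $(q,\min\{u,v\})$. So every hypothesis uses at most two labels; in fact each uses exactly two, since some tournament orients $\{u,v\}$ each way.

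For injectivity, suppose $\im(h_{q,e})=\im(h_{q',e'})$. Every label carries its block index as first coordinate, so $q=q'$. The second coordinates of the image recover the vertex set $e$, so $e=e'$. Hence $g\mapsto\im(g)$ is injective on $\Hcal$.

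With both hypotheses verified, Lemma~\ref{lem:ds} gives $\DSdim(\Hcal)\le2$. There is no real obstacle here. The only point needing care is the off-block value: it must still lie in $\{(q,u),(q,v)\}$, and it does, because $\min e\in e$ and the label is tagged with the hypothesis's own block $q$ rather than the instance's block $s$.
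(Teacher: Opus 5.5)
Your proposal is correct and is essentially the paper's proof, which simply applies Lemma~\ref{lem:ds} via Equation~\eqref{eq:image}. You spell out the two hypotheses the paper leaves implicit: every hypothesis uses at most two labels, including the off-block value, and the image map is injective, which the paper notes in one line after Equation~\eqref{eq:image}.
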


\begin{proof}
Apply Lemma~\ref{lem:ds} using Equation~\eqref{eq:image}.
\end{proof}

\section{No local regularizer learns the class}

Because $\Hcal$ is countable, fix an enumeration
\[
\Hcal=\{g_1,g_2,\ldots\}.
\]
Fix an arbitrary local regularizer $\psi$. At each test point $x$, refine its scores to a strict total order by declaring
\begin{equation}\label{eq:order}
g_i\prec_x g_j
\quad\Longleftrightarrow\quad
(\psi(g_i,x),i)<_{\mathrm{lex}}(\psi(g_j,x),j).
\end{equation}
For a nonempty realizable sample $S$, the first coordinate of any observed label fixes one block, so $V_{\Hcal}(S)$ is finite. It therefore has a unique $\prec_x$-least element. Let $A_\psi(S)(x)$ be the label of this element, and define $A_\psi$ arbitrarily on empty or unrealizable samples. The chosen hypothesis minimizes $\psi(\cdot,x)$ on the version space, hence $A_\psi$ is induced by $\psi$.

A regularizer learns $\Hcal$ only if every induced learner is PAC. It is therefore enough to show that $A_\psi$ is not PAC.

\subsection{Hard constant-label distributions}

Fix $q\ge3$, an edge $e\in E_q$, and an endpoint $v\in e$. Define $D_{e,v}$ to be the uniform distribution on
\begin{equation}\label{eq:hard-distribution}
\left\{
\bigl((q,\tau),(q,v)\bigr):
\tau\in\Tcal_q,\ \head_\tau(e)=v
\right\}.
\end{equation}
The distribution is realizable by $h_{q,e}$, and all labels are equal to $(q,v)$.

\begin{lemma}[Version-space structure]\label{lem:version-space}
Let $S\sim D_{e,v}^n$ with $n\ge1$. A hypothesis is consistent with $S$ if and only if it is $h_{q,f}$ for an edge $f$ incident to $v$ and every sampled tournament orients $f$ toward $v$. In particular, $h_{q,e}$ is always consistent. For distinct competitors $f\neq e$ incident to $v$, the consistency events are mutually independent, and each has probability
\[
p_n=2^{-n}.
\]
\end{lemma}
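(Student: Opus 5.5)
We prove the three claims of Lemma~\ref{lem:version-space} in order: the characterization of consistent hypotheses, the consistency of $h_{q,e}$, and the independence of the competitor events with probability $2^{-n}$.

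\textbf{Characterization.} Every sample point has label $(q,v)$ and instance in block $q$. A hypothesis $h_{s,f}$ with $s\neq q$ outputs a label with first coordinate $s$ on a block-$q$ instance, by Equation~\eqref{eq:hypothesis}. So it is inconsistent as soon as $n\ge1$. For $s=q$, the value on $(q,\tau)$ is $(q,\head_\tau(f))$, which lies in $f$. Consistency therefore requires $v\in f$, so $f$ is incident to $v$. Given that, consistency is exactly the condition $\head_{\tau_i}(f)=v$ for every sampled $\tau_i$. Every tournament in the support~\eqref{eq:hard-distribution} orients $e$ toward $v$, so $h_{q,e}$ is always consistent.

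\textbf{Independence.} The key observation is that $D_{e,v}$, projected to the tournament coordinate, is the uniform distribution on $\Tcal_q$ conditioned on the orientation of $e$. Since $\Tcal_q$ is in bijection with $\{0,1\}^{E_q}$ through independent edge orientations, this conditional law makes the orientations of the edges $f\neq e$ i.i.d.\ fair coins, independent of the fixed edge $e$. Because the $n$ sample tournaments are independent, the full family of orientations $\{\head_{\tau_i}(f): i\in[n],\ f\neq e\}$ consists of independent fair bits.

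\textbf{Probabilities.} For a competitor $f\neq e$ incident to $v$, its consistency event is $\bigcap_{i}\{\head_{\tau_i}(f)=v\}$. This event depends only on the bits indexed by $f$. Distinct competitors use disjoint sets of bits, so the events are mutually independent. Each event is an intersection of $n$ independent events of probability $1/2$, giving probability $p_n=2^{-n}$.

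The only step needing care is justifying the product structure of the conditioned uniform measure. It follows directly by counting: the support has $2^{M_q-1}$ equally likely elements, one for each orientation pattern on $E_q\setminus\{e\}$.
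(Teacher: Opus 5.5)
Your proposal is correct and follows essentially the same route as the paper's proof. The label $(q,v)$ forces block $q$ and an edge incident to $v$, consistency reduces to each sampled tournament orienting $f$ toward $v$, and conditioning a uniform tournament on the orientation of $e$ leaves independent fair orientations on $E_q\setminus\{e\}$, which gives mutual independence and probability $2^{-n}$. The paper simply asserts this product structure, whereas you justify it by counting the $2^{M_q-1}$ equally likely support elements.
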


\begin{proof}
The label $(q,v)$ forces a consistent hypothesis to lie in block $q$ and to correspond to an edge incident to $v$. For such an edge $f$, Equation~\eqref{eq:hypothesis} gives the label $(q,v)$ on a sampled tournament exactly when $f$ is oriented toward $v$. This condition holds automatically for $f=e$ by Equation~\eqref{eq:hard-distribution}.

The unordered edges of a uniform tournament have independent unbiased orientations. Conditioning on the orientation of $e$ leaves all other orientations independent and unbiased, and the sampled tournaments are independent. Each non-target competitor therefore survives with probability $2^{-n}$, with mutual independence across competitors.
\end{proof}

\subsection{Risk as a weighted inversion count}

Write
\[
d=q-1.
\]
For every incidence $(e,v)$ and every test tournament $\tau\in\Tcal_q$, define
\begin{equation}\label{eq:rho-global}
\rho_{e,v}^{(n)}(\tau)
=
\Prb_{S\sim D_{e,v}^n}
\bigl[A_\psi(S)(q,\tau)\neq(q,v)\bigr].
\end{equation}
We use this definition for every $\tau$, although the formula below concerns the case in which $e$ is incoming at $v$.

Fix a test tournament $\tau\in\Tcal_q$ and a vertex $v\in V_q$. Restrict the total order in Equation~\eqref{eq:order} at $(q,\tau)$ to the $d$ edges incident to $v$, and list them as
\[
e_1^v\prec_{(q,\tau)}e_2^v
\prec_{(q,\tau)}\cdots
\prec_{(q,\tau)}e_d^v,
\]
where we identify an edge with its corresponding block-$q$ hypothesis. Define
\[
b_j^v(\tau)
=
\begin{cases}
0, & \head_\tau(e_j^v)=v,\\
1, & \head_\tau(e_j^v)\neq v.
\end{cases}
\]
Thus $b_j^v=0$ denotes an incoming edge at $v$, and $b_j^v=1$ an outgoing edge.

When the target is $e_r^v$, write
\[
\rho_{v,r}^{(n)}(\tau)
=
\rho_{e_r^v,v}^{(n)}(\tau).
\]

For a small example, suppose four incident edges are ordered as outgoing, incoming, outgoing, incoming, with the last edge as the target. The target is always present. If the first edge survives, it is selected and the prediction is wrong; if it is removed but the second edge survives, the prediction is correct; if the first two are removed and the third survives, the prediction is wrong. In general, an error occurs exactly when the first surviving edge before an incoming target is outgoing.

\begin{lemma}[Weighted inversions]\label{lem:weighted-inversions}
If $b_r^v(\tau)=0$, then
\begin{equation}\label{eq:weighted-exact}
\rho_{v,r}^{(n)}(\tau)
=
\sum_{\substack{j<r\\b_j^v(\tau)=1}}
p_n(1-p_n)^{j-1}.
\end{equation}
Consequently, if
\[
I_v(\tau)
=
\left|
\left\{
(j,r):j<r,\ b_j^v(\tau)=1,\ b_r^v(\tau)=0
\right\}
\right|,
\]
then
\begin{equation}\label{eq:weighted-lower}
\sum_{r:b_r^v(\tau)=0}
\rho_{v,r}^{(n)}(\tau)
\ge
p_n(1-p_n)^{d-1}I_v(\tau).
\end{equation}
\end{lemma}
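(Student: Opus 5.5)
We will compute the error event directly from the version-space description in Lemma~\ref{lem:version-space}. Fix $\tau$, $v$, and a target index $r$ with $b_r^v(\tau)=0$, and take $S\sim D_{e_r^v,v}^n$. By Lemma~\ref{lem:version-space}, $V_{\Hcal}(S)$ consists of $h_{q,e_r^v}$ together with those competitors $e_j^v$, $j\neq r$, that survive. Each competitor survives independently with probability $p_n$. The learner $A_\psi$ outputs the label of the $\prec_{(q,\tau)}$-least consistent hypothesis. Every consistent hypothesis is an edge incident to $v$, so this is the surviving edge of least index among $e_1^v,\dots,e_d^v$. The target $e_r^v$ always survives, so the selected index $J$ satisfies $J\le r$.

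The prediction at $(q,\tau)$ is $(q,\head_\tau(e_J^v))$. It is wrong exactly when $b_J^v(\tau)=1$. Since $b_r^v(\tau)=0$, the case $J=r$ never produces an error. Hence the error event is the disjoint union over $j<r$ with $b_j^v(\tau)=1$ of the events $\{J=j\}$. For $j<r$, the event $\{J=j\}$ says that $e_1^v,\dots,e_{j-1}^v$ are all removed and $e_j^v$ survives. None of these edges is the target, so independence gives probability $(1-p_n)^{j-1}p_n$. Summing over these $j$ yields \eqref{eq:weighted-exact}.

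For \eqref{eq:weighted-lower}, note that $j-1\le d-1$ and $0\le 1-p_n\le1$, so each term satisfies $p_n(1-p_n)^{j-1}\ge p_n(1-p_n)^{d-1}$. Thus
\[
\rho_{v,r}^{(n)}(\tau)\ge p_n(1-p_n)^{d-1}\,\bigl|\{j<r:b_j^v(\tau)=1\}\bigr|.
\]
Summing this over all $r$ with $b_r^v(\tau)=0$ counts exactly the pairs $(j,r)$ in $I_v(\tau)$, which gives the bound.

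The only delicate point is the bookkeeping that makes $J$ well defined. The order $\prec_{(q,\tau)}$ is a strict total order on all of $\Hcal$. Its least element on $V_{\Hcal}(S)$ coincides with the least surviving edge in the restricted list, because $V_{\Hcal}(S)$ contains only block-$q$ hypotheses for edges incident to $v$. This step is immediate from Lemma~\ref{lem:version-space}, so no real obstacle remains.
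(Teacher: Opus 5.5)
Your proposal is correct and follows the paper's proof essentially step by step: the first surviving edge in the restricted order determines the prediction, the events $\{J=j\}$ for outgoing $j<r$ are disjoint with probability $p_n(1-p_n)^{j-1}$, and bounding each term below by $p_n(1-p_n)^{d-1}$ before summing over incoming $r$ is equivalent to the paper's exchange of summation order. You also state explicitly that the version space lies inside the $d$ edges incident to $v$, so the global least element equals the least surviving listed edge; the paper leaves this point implicit.
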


\begin{proof}
The target $e_r^v$ always survives, while every other incident edge survives independently with probability $p_n$. If $j<r$, all edges in positions $1,\ldots,j$ are non-target competitors. The learner selects the outgoing edge in position $j$ exactly when that edge survives and the preceding $j-1$ edges do not, an event of probability
\[
p_n(1-p_n)^{j-1}.
\]
These selection events are disjoint, and an error occurs precisely when the first surviving edge before the target is outgoing. This proves Equation~\eqref{eq:weighted-exact}.

Summing over incoming target positions gives
\[
\sum_{r:b_r^v=0}\rho_{v,r}^{(n)}(\tau)
=
\sum_{j:b_j^v=1}
 p_n(1-p_n)^{j-1}
 \bigl|\{r>j:b_r^v=0\}\bigr|.
\]
The summands on the left use the hard distributions associated with their respective target incidences; the display is a numerical sum, not a probability under a common training sample. Each weight on the right is at least $p_n(1-p_n)^{d-1}$, and the unweighted sum is $I_v(\tau)$.
\end{proof}

Set
\begin{equation}\label{eq:nq}
n_q=\left\lceil\log_2(q-1)\right\rceil,
\qquad
p_q=2^{-n_q}.
\end{equation}
Then
\begin{equation}\label{eq:pq-range}
\frac{1}{2d}\le p_q\le\frac{1}{d}.
\end{equation}
Since
\[
\left(1-\frac1d\right)^{d-1}\ge e^{-1}
\qquad(d\ge2),
\]
Equation~\eqref{eq:pq-range} yields
\begin{equation}\label{eq:weight-bound}
p_q(1-p_q)^{d-1}
\ge
\frac{1}{2ed}.
\end{equation}
Let
\[
I(\tau)=\sum_{v\in V_q}I_v(\tau)
\]
and
\[
W(\tau)
=
\sum_{v\in V_q}
\sum_{r:b_r^v(\tau)=0}
\rho_{v,r}^{(n_q)}(\tau).
\]
Applying Equation~\eqref{eq:weight-bound} to Equation~\eqref{eq:weighted-lower} gives
\begin{equation}\label{eq:W-lower}
W(\tau)
\ge
\frac{I(\tau)}{2e(q-1)}.
\end{equation}

\subsection{Cyclic triangles force inversions}

Let $T(\tau)$ denote the number of directed cyclic triangles in the tournament $\tau$.

\begin{lemma}[Triangle charging]\label{lem:triangle}
For every tournament $\tau\in\Tcal_q$ and every strict total ordering of its edge hypotheses,
\[
I(\tau)\ge T(\tau).
\]
\end{lemma}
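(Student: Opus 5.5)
Fix $\tau\in\Tcal_q$ and the strict total order $\prec=\prec_{(q,\tau)}$ on the block-$q$ edge hypotheses. I will prove $I(\tau)\ge T(\tau)$ by a charging argument: to each directed cyclic triangle of $\tau$ I assign one inversion, that is, a triple $(v,f,g)$ with $f,g$ incident to $v$, $f$ outgoing at $v$, $g$ incoming at $v$, and $f\prec g$. Then I show the assignment is injective. Since the listing $e_1^v\prec\cdots\prec e_d^v$ is the restriction of $\prec$, such a triple is exactly a pair $(j,r)$ counted in $I_v(\tau)$, and $I(\tau)=\sum_v I_v(\tau)$ counts all these triples.

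\textbf{Charging.} Let the cyclic triangle be $a\to b\to c\to a$, so its edges are $\{a,b\}$, $\{b,c\}$, $\{c,a\}$. Each vertex of the triangle has exactly one of these edges incoming and one outgoing: at $a$, $\{a,b\}$ is outgoing and $\{c,a\}$ is incoming, and cyclically for $b$ and $c$. Let $f$ be the $\prec$-least of the three edges, and let $w$ be its tail in $\tau$, so $f$ is outgoing at $w$. The other triangle edge at $w$, call it $g$, is incoming at $w$ by the cyclic structure. Since $f$ is $\prec$-minimal among the three, $f\prec g$. Hence $(w,f,g)$ is an inversion at $w$, contributing to $I_w(\tau)$. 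By symmetry under rotating the labels $a\to b\to c$, it suffices to check one case, say $f=\{a,b\}$, where $w=a$ and $g=\{c,a\}$.

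\textbf{Injectivity.} From the charged triple $(w,f,g)$ we recover the triangle: $f$ and $g$ are two distinct edges sharing the vertex $w$, so $f\cup g$ is exactly the vertex set of the triangle. A cyclic triangle is determined by its vertex set, since its orientation is read off from $\tau$. Thus distinct cyclic triangles receive distinct inversions, and $I(\tau)\ge T(\tau)$.

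I do not expect a real obstacle. The only points needing care are the orientation bookkeeping (the tail of the minimal edge is where it is outgoing, and the cyclic structure makes the other edge there incoming) and confirming that the per-vertex lists in Lemma~\ref{lem:weighted-inversions} use the same global order $\prec_{(q,\tau)}$, which holds by construction.
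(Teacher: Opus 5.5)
Your proof is correct and is essentially the paper's argument: you show each cyclic triangle contains an inversion, then observe that an inversion (two distinct edges at a common vertex) determines the triangle's vertex set, so the charging is injective. The only difference is that you exhibit the inversion directly, at the tail of the $\prec$-least edge of the triangle. The paper instead argues by contradiction: if no vertex had an inversion, then $a\prec b\prec c\prec a$, which is impossible.
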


\begin{proof}
Consider a directed cyclic triangle
\[
u\longrightarrow v\longrightarrow w\longrightarrow u
\]
with edges
\[
a=\{u,v\},
\qquad
b=\{v,w\},
\qquad
c=\{w,u\}.
\]
The three comparisons use the same strict total order at $(q,\tau)$. If there were no inversion at any vertex, then the incoming edge would precede the outgoing edge at each vertex, giving
\[
a\prec b\prec c\prec a.
\]
Every cyclic triangle therefore contains an inversion.

An inversion is an ordered pair of distinct incident edges. It determines the common vertex and the two remaining endpoints, hence a unique triangle. Inversions belonging to distinct triangles are therefore disjoint, so $I(\tau)\ge T(\tau)$.
\end{proof}

For a uniformly random tournament, each triple of vertices is cyclic with probability $1/4$. Therefore
\begin{equation}\label{eq:cyclic-expectation}
\E_{\tau\sim\Unif(\Tcal_q)}T(\tau)
=
\frac14\binom q3.
\end{equation}
Lemma~\ref{lem:triangle}, Equation~\eqref{eq:W-lower}, and the expectation in Equation~\eqref{eq:cyclic-expectation} give
\begin{equation}\label{eq:expected-W}
\E_{\tau}W(\tau)
\ge
\frac{1}{8e(q-1)}\binom q3.
\end{equation}

\subsection{Averaging over target incidences}

For an incidence $(e,v)$ with $e\in E_q$ and $v\in e$, let
\[
R_{e,v}^{(q)}
=
\E_{S\sim D_{e,v}^{n_q}}
L_{D_{e,v}}(A_\psi(S)).
\]
Here the risk averages over the training sample and an independent test tournament from the same hard distribution. An unconditioned uniform tournament points a fixed edge toward either endpoint with probability $1/2$, which accounts for the factor of $2$ in the proof below.

\begin{lemma}[Incidence averaging]\label{lem:incidence}
For every $q\ge3$,
\begin{equation}\label{eq:incidence-identity}
\frac{1}{2M_q}
\sum_{e\in E_q}\sum_{v\in e}R_{e,v}^{(q)}
=
\frac{1}{M_q}
\E_{\tau\sim\Unif(\Tcal_q)}W(\tau).
\end{equation}
\end{lemma}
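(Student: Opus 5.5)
The plan is to unfold each $R_{e,v}^{(q)}$ as an average over the test tournament, and then exchange the order of summation. The training sample and the test point are independent draws from $D_{e,v}$. Write $\Tcal_q^{e\to v}=\{\tau\in\Tcal_q:\head_\tau(e)=v\}$, so $|\Tcal_q^{e\to v}|=|\Tcal_q|/2$. By Fubini,
\[
R_{e,v}^{(q)}
=
\frac{2}{|\Tcal_q|}\sum_{\tau\in\Tcal_q^{e\to v}}
\Prb_{S\sim D_{e,v}^{n_q}}\bigl[A_\psi(S)(q,\tau)\neq(q,v)\bigr]
=
\frac{2}{|\Tcal_q|}\sum_{\tau\in\Tcal_q}\ind\{\head_\tau(e)=v\}\,\rho_{e,v}^{(n_q)}(\tau).
\]
This uses Equation~\eqref{eq:rho-global}. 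Since $\rho$ was defined for every $\tau$, the indicator is the only place where the conditioning enters.

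Next I sum over incidences and divide by $2M_q$. The factor $2$ from the conditioning cancels the $2$ in $2M_q$, giving
\[
\frac{1}{2M_q}\sum_{e}\sum_{v\in e}R_{e,v}^{(q)}
=
\frac{1}{M_q}\,\E_{\tau\sim\Unif(\Tcal_q)}\sum_{e\in E_q}\sum_{v\in e}\ind\{\head_\tau(e)=v\}\,\rho_{e,v}^{(n_q)}(\tau).
\]

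It remains to identify the inner double sum with $W(\tau)$ for each fixed $\tau$. Regroup the pairs $(e,v)$ with $v\in e$ by the vertex $v$. For fixed $v$, the edges $e\ni v$ are exactly $e_1^v,\ldots,e_d^v$ in the $\prec_{(q,\tau)}$-ordering. The condition $\head_\tau(e_r^v)=v$ is exactly $b_r^v(\tau)=0$, and by definition $\rho_{e_r^v,v}^{(n_q)}(\tau)=\rho_{v,r}^{(n_q)}(\tau)$. The inner sum is therefore $\sum_v\sum_{r:b_r^v(\tau)=0}\rho_{v,r}^{(n_q)}(\tau)=W(\tau)$, which proves Equation~\eqref{eq:incidence-identity}.

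The main point needing care is bookkeeping rather than estimation. The labeling $e_r^v$ depends on $\tau$, but that is harmless because it is only a reindexing of the incident edges at each fixed $\tau$. The other thing to check is that the learner's prediction at $(q,\tau)$ depends only on $S$ and $\tau$, so the test-point and sample expectations may be swapped. Both are immediate from the definitions.
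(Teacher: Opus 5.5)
Your proof is correct and follows essentially the same route as the paper. Both write $R_{e,v}^{(q)}=2\,\E_\tau\bigl[\ind\{\head_\tau(e)=v\}\,\rho_{e,v}^{(n_q)}(\tau)\bigr]$, sum over all $2M_q$ incidences, and identify the inner sum at each fixed $\tau$ with $W(\tau)$; your explicit regrouping by vertex and your remark on the $\tau$-dependent labeling $e_r^v$ spell out the step the paper compresses into ``exactly one endpoint is the head.''
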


\begin{proof}
For a fixed incidence $(e,v)$, population risk is the error probability on an independent test tournament conditioned on $\head_\tau(e)=v$:
\[
R_{e,v}^{(q)}
=
\E_{\tau\mid\head_\tau(e)=v}
\rho_{e,v}^{(n_q)}(\tau).
\]
A uniform tournament orients $e$ toward either endpoint with probability $1/2$. Thus
\[
R_{e,v}^{(q)}
=
2\E_\tau
\left[
\ind\{\head_\tau(e)=v\}
\rho_{e,v}^{(n_q)}(\tau)
\right].
\]
For fixed $\tau$ and $e$, exactly one endpoint is the head. Summing over all $2M_q$ incidences therefore gives
\[
\sum_{e\in E_q}\sum_{v\in e}R_{e,v}^{(q)}
=
2\E_\tau W(\tau).
\]
Division by $2M_q$ proves Equation~\eqref{eq:incidence-identity}.
\end{proof}

Substituting Equation~\eqref{eq:expected-W} into Equation~\eqref{eq:incidence-identity} yields
\begin{align}
\frac{1}{2M_q}
\sum_{e\in E_q}\sum_{v\in e}R_{e,v}^{(q)}
&\ge
\frac{1}{8e(q-1)M_q}\binom q3\notag\\
&=
\frac{q-2}{24e(q-1)}
\ge
\frac{1}{48e}.
\label{eq:average-risk}
\end{align}
Hence, for every $q\ge3$, some incidence $(e_q,v_q)$ satisfies
\begin{equation}\label{eq:chosen-incidence}
R_{e_q,v_q}^{(q)}\ge c_0,
\qquad
c_0=\frac{1}{48e}.
\end{equation}

\subsection{Failure of PAC learning}

\begin{theorem}\label{thm:no-local}
No local regularizer learns $\Hcal$.
\end{theorem}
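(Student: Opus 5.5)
We argue by contradiction: suppose $\psi$ learns $\Hcal$. By Definition~\ref{def:local-reg}, every learner induced by $\psi$ then PAC learns $\Hcal$. In particular the deterministic learner $A_\psi$ built from the lexicographic refinement in Equation~\eqref{eq:order} is such a learner, so it suffices to show that $A_\psi$ is not PAC. The plan is to turn the expected-risk bound of Equation~\eqref{eq:chosen-incidence} into a high-probability failure. We use the fact that the risk of any predictor is bounded by $1$, and apply a reverse Markov inequality. Then, because $n_q\to\infty$, we obtain arbitrarily large sample sizes, which also gives the quantitative part of Theorem~\ref{thm:main}.

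\textbf{Step 1: reverse Markov.} Fix $q\ge3$ and the incidence $(e_q,v_q)$ from Equation~\eqref{eq:chosen-incidence}. Let $D=D_{e_q,v_q}$, which is $\Hcal$-realizable by $h_{q,e_q}$, and let $Z=L_D(A_\psi(S))\in[0,1]$ with $S\sim D^{n_q}$. Since $\E Z\ge c_0$, for any $\varepsilon_0<c_0$ we have
\[
c_0\le \E Z\le \varepsilon_0+\Prb[Z>\varepsilon_0],
\]
so $\Prb[Z>\varepsilon_0]\ge c_0-\varepsilon_0$. Taking $\varepsilon_0=c_0/2=1/(96e)$ gives a failure probability of at least $1/(96e)$. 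This is strictly larger than $\delta_0=1/(192e)$, which matches the constants in Theorem~\ref{thm:main}.

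\textbf{Step 2: arbitrarily large sample sizes.} We have $n_q=\lceil\log_2(q-1)\rceil$, which is unbounded as $q$ grows. So for every proposed threshold $m_{A_\psi}(\varepsilon_0,\delta_0)$ there is some $q$ with $n_q\ge m_{A_\psi}(\varepsilon_0,\delta_0)$. For that $q$, the distribution $D$ is realizable and $\Prb_{S\sim D^{n_q}}[L_D(A_\psi(S))>\varepsilon_0]>\delta_0$. This contradicts the PAC guarantee, since that guarantee must hold at every $n\ge m_{A_\psi}$. Hence $A_\psi$ is not PAC and $\psi$ does not learn $\Hcal$.

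\textbf{Main obstacle.} The combinatorial work is already done in Lemmas~\ref{lem:weighted-inversions}--\ref{lem:incidence}, so the remaining difficulty is bookkeeping. First, $A_\psi$ must genuinely be induced by $\psi$. This holds because version spaces are finite, as noted in Remark~\ref{rem:nonvacuity}. Second, the failure must occur at a single sample size that exceeds any given threshold, not merely along the specific sequence $n_q$. The PAC definition requires the guarantee at all $n\ge m_A$, so one bad $n_q$ above the threshold is enough.
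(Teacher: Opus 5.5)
Your proposal is correct and follows the paper's proof essentially verbatim. Both take the lexicographically tie-broken induced learner $A_\psi$ and apply the reverse Markov bound $\E Z_q \le \eta + \Prb[Z_q>\eta]$ with $\eta=c_0/2$ to the incidence from Equation~\eqref{eq:chosen-incidence}, then use the unboundedness of $n_q=\lceil\log_2(q-1)\rceil$ to beat any proposed threshold; the only cosmetic difference is that you phrase it as a contradiction, whereas the paper argues directly that $A_\psi$ is not PAC.
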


\begin{proof}
Fix $\psi$ and the induced learner $A_\psi$ constructed from Equation~\eqref{eq:order}. For each $q$, choose $(e_q,v_q)$ as in Equation~\eqref{eq:chosen-incidence}, and let
\[
Z_q
=
L_{D_{e_q,v_q}}(A_\psi(S)),
\qquad
S\sim D_{e_q,v_q}^{n_q}.
\]
Then $0\le Z_q\le1$ and $\E Z_q\ge c_0$. For every $\eta\in(0,1)$,
\[
\E Z_q
\le
\eta+\Prb[Z_q>\eta].
\]
Taking $\eta=c_0/2$ gives
\[
\Prb\left[Z_q>\frac{c_0}{2}\right]
\ge
\frac{c_0}{2}.
\]
Set
\[
\varepsilon_0=\frac{c_0}{2}=\frac{1}{96e},
\qquad
\delta_0=\frac{c_0}{4}=\frac{1}{192e}.
\]
Then, for every $q$,
\[
\Prb_{S\sim D_{e_q,v_q}^{n_q}}
\left[
L_{D_{e_q,v_q}}(A_\psi(S))>\varepsilon_0
\right]
\ge
2\delta_0
>
\delta_0.
\]
The sample sizes
\[
n_q=\left\lceil\log_2(q-1)\right\rceil
\]
are unbounded. Given a proposed PAC threshold $m$, choose $q\ge2^m+1$. Then $n_q\ge m$, and the realizable distribution $D_{e_q,v_q}$ violates the $(\varepsilon_0,\delta_0)$ guarantee at sample size $n_q$. Thus $A_\psi$ is not PAC. Since it is induced by the arbitrary regularizer $\psi$, that regularizer does not learn $\Hcal$.
\end{proof}

\begin{proof}[Proof of Theorem~\ref{thm:main}]
The sample-complexity upper bound is Proposition~\ref{prop:pac}, the DS-dimension bound is Corollary~\ref{cor:ds}, and the impossibility for local regularization is Theorem~\ref{thm:no-local}.
\end{proof}

\section{Discussion}

The learner in Proposition~\ref{prop:pac} succeeds after both endpoint labels of the target edge appear. A local regularizer has less freedom: at each tournament it fixes a ranking of the edge hypotheses before seeing the sample. Training data may remove edges from that ranking, but it cannot reorder those that remain.

This differs from the transductive lower bound of Jafar, Asilis, and Dughmi \cite{jafar2025}, which forces an error at a single withheld point. One such point may have negligible mass in a PAC distribution, and repeated sampling may reveal it. The present construction instead uses a constant-label distribution supported on exponentially many tournaments. Sampling filters many competitors, while population risk is evaluated on a fresh tournament from the same distribution.

The theorem applies to local scores fixed independently of the sample. It does not address the unsupervised local-regularization framework of Asilis et al. \cite{asilis-regularization}, where the rule may depend on the unlabeled sample.

\section{Conclusion}

The tournament class is realizably PAC learnable with sample complexity
\[
O\!\left(\frac{1}{\varepsilon}\log\frac{1}{\delta}\right)
\]
and has Daniely--Shalev-Shwartz dimension at most two. Nevertheless, every local regularizer admits an induced learner that fails at arbitrarily large sample sizes. Under the nonvacuous formulation used here, local regularization therefore does not characterize realizable multiclass PAC learnability.


\begin{thebibliography}{9}

\bibitem{asilis-open}
Julian Asilis, Siddartha Devic, Shaddin Dughmi, Vatsal Sharan, and Shang-Hua Teng.
\newblock Open problem: Can local regularization learn all multiclass problems?
\newblock In \emph{Proceedings of the 37th Conference on Learning Theory}, volume 247 of \emph{Proceedings of Machine Learning Research}, pages 5301--5305, 2024.

\bibitem{asilis-regularization}
Julian Asilis, Siddartha Devic, Shaddin Dughmi, Vatsal Sharan, and Shang-Hua Teng.
\newblock Regularization and optimal multiclass learning.
\newblock In \emph{Proceedings of the 37th Conference on Learning Theory}, volume 247 of \emph{Proceedings of Machine Learning Research}, pages 260--310, 2024.

\bibitem{brukhim2022}
Nataly Brukhim, Daniel Carmon, Irit Dinur, Shay Moran, and Amir Yehudayoff.
\newblock A characterization of multiclass learnability.
\newblock In \emph{63rd Annual IEEE Symposium on Foundations of Computer Science}, pages 943--955. IEEE, 2022.

\bibitem{daniely2014}
Amit Daniely and Shai Shalev-Shwartz.
\newblock Optimal learners for multiclass problems.
\newblock In \emph{Proceedings of the 27th Conference on Learning Theory}, volume 35 of \emph{Proceedings of Machine Learning Research}, pages 287--316, 2014.

\bibitem{jafar2025}
Sky Jafar, Julian Asilis, and Shaddin Dughmi.
\newblock Local regularizers are not transductive learners.
\newblock In \emph{Proceedings of the 38th Conference on Learning Theory}, volume 291 of \emph{Proceedings of Machine Learning Research}, pages 2942--2957, 2025.

\end{thebibliography}
\end{document}